\newcommand\figref{Figure~\ref}
\newcommand\appref{Appendix~\ref}
\newcommand\secref{Section~\ref}
\newtheorem{theorem}{Theorem}[section]
\newtheorem{lemma}[theorem]{Lemma}
\theoremstyle{definition}
\newtheorem{definition}{Definition}[section]
\title{How memory architecture affects learning in a simple POMDP: the two-hypothesis testing problem}
\author{
\textbf{Mario Geiger} \\
Institute of Physics \\ \'Ecole Polytechnique F\'ed\'eral de Lausanne \\ 1015 Lausanne, Switzerland \\
\href{mailto:mario.geiger@epfl.ch}{\texttt{mario.geiger@epfl.ch}} \\
\and
\textbf{Christophe Eloy} \\
Institut de Recherche sur les Phenom\`enes Hors \'Equilibres \\ Centrale Marseille \\ CNRS \\ Aix-Marseille Universit\'e \\ 13013 Marseille, France \\
\href{mailto:christophe.eloy@irphe.univ-mrs.fr}{\texttt{christophe.eloy@irphe.univ-mrs.fr}} \\
\and
\textbf{Matthieu Wyart} \\ Institute of Physics \\ \'Ecole Polytechnique F\'ed\'eral de Lausanne \\ 1015 Lausanne, Switzerland \\
\href{mailto:matthieu.wyart@epfl.ch}{\texttt{matthieu.wyart@epfl.ch}} \\
}
\date{March 2021}
\begin{document}

\maketitle

\begin{abstract}
Reinforcement learning is generally difficult for partially observable Markov decision processes (POMDPs), which occurs when the agent's observation is partial or noisy. To seek good performance in POMDPs, one strategy is to endow the agent with a finite memory, whose update is governed by the policy. However, policy optimization is non-convex in that case and can lead to poor training performance for random initialization. The performance can be empirically improved by constraining the memory architecture, then sacrificing optimality to facilitate training. Here we study this trade-off in a two-hypothesis testing problem, akin to the two-arm bandit problem. We compare two extreme cases: (i) the random access memory where any transitions between $M$ memory states are allowed and (ii) a fixed memory where the agent can access its last $m$ actions and rewards. For (i), the probability $q$ to play the worst arm is known to be exponentially small in $M$ for the optimal policy. Our main result is to show that similar performance can be reached for (ii) as well, despite the simplicity of the memory architecture: using a conjecture on Gray-ordered binary necklaces, we find policies for which $q$ is exponentially small in $2^m$, i.e. $q\sim\alpha^{2^m}$ with $\alpha < 1$. In addition, we observe empirically that training from random initialization leads to very poor results for (i), and significantly better results for (ii) thanks to the constraints on the memory architecture.
\end{abstract}

\section{Introduction} \label{sec:intro}

Reinforcement learning is aimed at finding the sequence of actions that should take an agent to maximise a long-term reward (\cite{sutton2018reinforcement}).
This sequential decision-making is usually modeled as a Markov decision process (MDP): at each time step, the agent chooses an action based on a policy (a function that relates the agent's state to its action), with the aim of maximizing its value (the expected discounted sum of rewards). 
Deterministic optimal policies can be found through dynamic programming (\cite{bellman1966dynamic}) when MDPs are discrete (both states and actions belong to discrete sets) and the agent fully knows its environment (\cite{watkins1992q}).

A  practical difficulty arises when the agent only have a partial observation of its environment or when this observation is imperfect or stochastic. The mathematical framework is then known as a partially observable Markov decision process (POMDP) (\cite{smallwood1973optimal}). 
In this framework, the agent's state is replaced by the agent's belief, which is the probability distribution over all possible states. At each time step, the agent's belief can be updated through Bayesian inference to account for observations. In the belief space, the problem becomes fully observable again and the POMDP can thus be solved as a ``belief MDP''. However, the dimension of the belief space is much larger than the state space and solving the belief MDP can be challenging in practical problems. Some approaches seek to resolve this difficulty by approximating of the belief and the value functions (\cite{hauskrecht2000value,roy2005finding,silver2010monte,somani2013despot}), or use deep model-free reinforcement learning where the  neural network is complemented with a memory (\cite{oh2016control,khan2017memory}) or a recurrency (\cite{hausknecht2015deep,li2015recurrent}) to better approximate history-based policies.

Here we focus on the idea of \cite{littman1993optimization}, who proposed to  give the agent a limited number of bits of memory,
an idea that has been developed independently in the robotics community where it is known as a finite-state controller (\cite{MeuleauKKC99,meuleau2013learning}). These works show that adding a memory usually increases the performance in POMDPs.
But to this day, attempts to find optimal memory allocation have been essentially empirical (\cite{peshkin2001learning,Zhang2016b,ToroIcarte2020arxiv}).
One central difficulty is that the value is a non-convex function of policy for POMDPs (\cite{jaakkola1995reinforcement}):
learning will thus generally get stuck in  poor local maxima for random policy initialization. 
This  problem is even more acute when memory is large or when all transitions between memory states are allowed.  
To improve learning, restricting the policy space to specific memory architectures where most transitions are forbidden is key (\cite{peshkin2001learning,Zhang2016b,ToroIcarte2020arxiv}).
However, there is no general principles to optimize the memory architectures or the policy initialization.
In fact, this question is not understood satisfyingly  even in the simplest tasks- arguably a necessary step to later achieve a broad understanding.

Here, we  work out how the memory architecture affects optimal solutions in perhaps the simplest POMDP, and find that these solutions are intriguingly complex. Specifically, we consider the two-hypothesis testing problem. At each time step, the agent chooses to pull one of two arms that yield random rewards with different means. We compare two memory structures: (i) a random access memory (RAM) in which all possible transitions between $M$ distinct memory states are allowed;
(ii) a Memento memory in which the agent can access its last $m$ actions and rewards. 

When the agent is provided with a RAM memory, we study the performance of a ``column of confidence'' policy (CCP):  the agent keeps repeating the same action and updates its confidence in it by moving up and down the memory sites until it reaches the bottom of the column and the alternative action is tried. 
The performance of this policy is assessed through the calculation of the expected frequency $q$ to play the worst arm (thus the smaller $q$, the better). 
For the CCP,  $q$ can be shown to be exponentially small in $M$. 
This result is closely related to the work of \cite{Hellman1970} on hypothesis testing and its extension to finite horizon (\cite{Wilson2014}). 
In practice, we find that learning a policy with a RAM memory and random initialization leads to poor results, far from the performance of the column of confidence policy. 
Restricting memory transitions to chain-like transitions leads to much better results, although still sub-optimal. 

Our main findings concerns the Memento memory architecture. Surprisingly, despite the lack of flexibility of the memory structure, excellent policies exist. Specifically, using a conjecture on Gray-ordered binary necklaces (\cite{degni2007gray}), we find a policy for which $q$ is exponentially small in $2^m$ ---which is considerably better than $q\sim \ln(m)/m$, optimal for an agent that only plays $m$ times. For Memento memory, we also observe empirically that learning is faster and perform better than in the RAM case.

The code to reproduce the experiments is available at \url{https://anonymous.4open.science/r/two-hypothesis-BAB3}, and uses a function defined here \url{https://anonymous.4open.science/r/gradientflow/gradientflow}. The experiments where executed on CPUs for about 10 thousand CPU hours.

\section{POMDPs and the two-hypothesis testing problem}

\subsection{General formulation} \label{sec:general_formul}

\begin{definition}[POMDP]\label{def:POMDP}
A discrete-time POMDP model is defined as the 8-tuple $(S,A,T,R,\Omega,O,p_0,\gamma)$:
$S$ is a set of states,
$A$ is a set of actions,
$T$ is a conditional transition probability function $T(s'|s,a)$ where $s',s \in S$ and $a \in A$,
$R: S \to \mathbb{R}$ is the reward function\footnote{In the literature, $R$ also depends on the action: $R: S\times A\to \mathbb{R}$. Our notation is not a loss of generality. The set of state can be made bigger $S \to S\times A$ in order to contain the last action.},
$\Omega$ is a set of observations,
$O(o|s)$ is a conditional observation probability with $o \in \Omega$ and $s \in S$,
$p_0(s): S \to \mathbb{R}$ is the probability to start in a given state $s$,
and $\gamma \in [0,1)$ is the discount factor.

A state $s \in S$ specifies everything about the world at a given time (the agent, its memory and all the rest). The agent starts its journey in a state $s\in S$ with probability $p_0(s)$. Based on an observation $o\in \Omega$ obtained with probability $O(o|s)$ the agent takes an action $a\in A$. This action causes a transition to the state $s'$ with probability $T(s'|s,a)$ and the agent gains the reward $R(s)$. And so on.
\end{definition}

\begin{definition}[Policy]
A policy $\pi(a|o)$ is a conditional probability of executing an action $a\in A$ given an observation $o \in \Omega$.
\end{definition}

\begin{definition}[Policy State Transition]
Given a policy $\pi$, the state transition $T_\pi$ is given by
\begin{equation}
    T_\pi(s'|s) = \sum_{o,a \in \Omega \times A} T(s'|s,a) \pi(a|o) O(o|s).
\end{equation}
\end{definition}

\begin{definition}[Expected sum of discounted rewards]
The expected sum of future discounted rewards of a policy $\pi$ is
\begin{equation}\label{Gpi}
    G_\pi = \mathbb{E}_{\begin{array}{l}
        s_0 \sim p_0 \\
        s_1 \sim T_\pi(\cdot|s_0) \\
        s_2 \sim T_\pi(\cdot|s_1) \\
        \dots
    \end{array}}\left[ \sum_{t=0}^\infty \gamma^t R(s_t) \right].
\end{equation}
\end{definition}

Note that a POMDP with an expected sum of future discounted rewards with discount factor $\gamma$ can be reduced to an undiscounted POMDP (\cite{altman}), as we now recall (see proof in Appendix~\ref{app:lemma}):

\begin{lemma} \label{lem:discounted_reward}
The discounted POMDP defined in \ref{def:POMDP} with a discount $\gamma$ is equivalent to an undiscounted POMDP with a probability $r=1-\gamma$ to be reset from any state toward an initial state. In the undiscounted POMDP, the agent reaches a steady state $p(s)$ which can be used to calculate the expected sum of discounted rewards $G_\pi = \frac{1}{r} \mathbb{E}_{s\sim p}[R(s)]$.
\end{lemma}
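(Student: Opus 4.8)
The plan is to make precise the probabilistic meaning of the discount factor: the weight $\gamma^t$ is exactly the probability that a geometric ``reset clock'' with reset probability $r=1-\gamma$ has not fired during the first $t$ steps. Concretely, I would introduce the \emph{reset chain} on $S$ with transition kernel
\begin{equation}
    \tilde T_\pi(s'|s) = \gamma\, T_\pi(s'|s) + r\, p_0(s'),
\end{equation}
which at each step either follows the original policy-induced transition $T_\pi$ (with probability $\gamma$) or jumps to a fresh state drawn from $p_0$ (with probability $r$). This is manifestly a stochastic matrix, and since every state reaches $\mathrm{supp}(p_0)$ in one step with positive probability $r$, the chain restricted to its recurrent class is irreducible and aperiodic, so it admits a unique stationary distribution $p$.

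The heart of the argument is to identify $p$ with the normalized discounted state-occupancy measure of the \emph{original} chain started from $p_0$,
\begin{equation}
    d_\gamma(s) \;=\; r \sum_{t=0}^\infty \gamma^t\, \mathbb{P}(s_t = s),
\end{equation}
where $s_0\sim p_0$ and $s_{t+1}\sim T_\pi(\cdot|s_t)$. Since $\gamma<1$ this series converges and a direct summation shows $\sum_s d_\gamma(s)=1$, so $d_\gamma$ is a probability distribution. I would then split off the $t=0$ term and reindex the remaining sum, using $\mathbb{P}(s_{t+1}=s')=\sum_s T_\pi(s'|s)\,\mathbb{P}(s_t=s)$, to obtain the balance equation $d_\gamma(s') = r\, p_0(s') + \gamma \sum_s T_\pi(s'|s)\, d_\gamma(s)$. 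This is exactly the fixed-point relation $d_\gamma = d_\gamma \tilde T_\pi$, so by uniqueness of the stationary distribution $p = d_\gamma$.

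Finally I would compute the expected sum of discounted rewards directly from its definition in \eqref{Gpi}, interchanging the sum over $t$ with the expectation:
\begin{equation}
    G_\pi = \sum_{t=0}^\infty \gamma^t\, \mathbb{E}[R(s_t)] = \sum_s R(s) \sum_{t=0}^\infty \gamma^t\, \mathbb{P}(s_t=s) = \frac{1}{r}\sum_s R(s)\, d_\gamma(s) = \frac{1}{r}\,\mathbb{E}_{s\sim p}[R(s)],
\end{equation}
which is the claimed identity.

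The step I expect to require the most care is the well-posedness of the stationary distribution: one must argue that the reset chain has a \emph{unique} stationary $p$ to which the agent genuinely converges, rather than merely verifying that $d_\gamma$ solves the balance equation. The reset term $r\,p_0$ supplies the uniform minorization that guarantees irreducibility and aperiodicity of the reachable class, but this should be stated explicitly. A minor technical point is justifying the interchange of summation and expectation in the last display; this is immediate if $R$ is bounded, and otherwise follows from $\gamma<1$ together with an integrability assumption on $R$.
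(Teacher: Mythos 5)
Your proof is correct, and it is built on the same object as the paper's — the reset kernel $\tilde T_\pi(s'|s) = r\,p_0(s') + (1-r)\,T_\pi(s'|s)$ — but the argument runs in the opposite direction. The paper takes the stationarity equation $\vec p = r \vec p_0 + (1-r) T_\pi \vec p$ and unrolls it recursively, getting $\vec p = r \sum_{t=0}^{n-1} (1-r)^t T_\pi^t \vec p_0 + (1-r)^n T_\pi^n \vec p$; letting $n\to\infty$ kills the remainder (the vector $T_\pi^n\vec p$ stays bounded) and directly yields $\mathbb{E}_{s\sim p}[f(s)] = r\,\mathbb{E}\bigl[\sum_t \gamma^t f(s_t)\bigr]$, which with $f=R$ is the claim. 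A consequence of that unrolling is that \emph{any} stationary $p$ equals the normalized discounted occupancy measure, so the paper never needs a separate uniqueness argument. You instead construct $d_\gamma$ explicitly, verify it solves the balance equation, and then invoke uniqueness of the stationary distribution to conclude $p = d_\gamma$ — an extra ingredient your route genuinely requires, and which you correctly supply via the Doeblin-type minorization $\tilde T_\pi(\cdot|s) \ge r\,p_0(\cdot)$. What your version buys in exchange is that it makes explicit two points the paper glosses over: existence of the steady state, and the fact that the agent actually \emph{reaches} it (your minorization gives contraction in total variation at rate $1-r$, hence convergence from any initial distribution, whereas the paper simply asserts stationarity). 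Your closing caveats — finiteness or boundedness assumptions to justify interchanging the sum over $t$ with the expectation — cover the only remaining technical point, which the paper also handles implicitly by working with finite $S$.
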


\subsection{Optimization algorithm} \label{sec:algo}
To optimize a policy algorithmically, we apply gradient descent on the expected sum of discounted rewards.
First, we parametrize a policy with parameters $w \in \mathbb{R}^{|A|\times|\Omega|}$, normalized to get a probability using the softmax function $\pi_w(a|o) = \frac{\exp(w_{ao})}{\sum_{b} \exp(w_{bo})}$. Then, we compute the transition matrix $\tilde T_\pi$, from which we obtain the steady state $p$ using the power method (See \appref{app:implem}). Finally, we calculate $G_\pi$ by the Lemma~\ref{lem:discounted_reward}.

Using an algorithm that keeps track of the operations (we use \texttt{pytorch} \cite{paszke2017automatic}), we can compute the gradient of $G_\pi$ with respect to the parameters $w$ and perform gradient descent with adaptive time steps (i.e. a gradient flow dynamics):
\begin{equation} \label{eq:flow}
    \frac{d}{dt} w = \frac{d}{dw} G_\pi(w).
\end{equation}

\subsection{Two-hypothesis testing problem} \label{sec:2hyp_def}
The problem we consider is the two-hypothesis testing problem.
We label two arms by the letters A and B.
The two arms gives a reward of $+1$ or $-1$ with a Bernoulli distribution. The probabilities to obtain a positive reward are noted $k_A$ and $k_B$ respectively.
The environment is entirely defined by the couple $(k_A, k_B)$. With equal probability, the environment is in one of the following two configurations (hypothesis):
\begin{equation}
\label{00}
    \left\{ \begin{array}{cc}
        k_A = \frac{1 + \mu}{2} & k_B = \frac{1 - \mu}{2} \quad \mbox{(hypothesis $H_A$)}\\
        k_A = \frac{1 - \mu}{2} & k_B = \frac{1 + \mu}{2} \quad \mbox{(hypothesis $H_B$)}\\
    \end{array} \right.
\end{equation}
where the hypothesis $H_A$ (resp. $H_B$) corresponds to A (resp. B) being the best arm.
In expectation over the environments, an agent that plays randomly or always the same arm will have a reward 0.

Note that this problem is similar to the Bandit problem, except that in the latter $(k_A, k_B)$ can take any value in the square $[0,1]^2$. When $r\rightarrow 0$, our results below can be generalized to the bandit problem, as done in \cite{hellman_cover_bandit} by recasting the latter  as finding the correct hypothesis ( `Arm A is better' or `Arm B is better').

In our setup, the agent only knows its last arm played, reward obtained (if there were some) and the state of its memory (different memories are described below).
Based on that, it chooses an arm (play A or B) and how to update its memory state.

In the POMDP formalism (c.f. \ref{def:POMDP}), the state $s$ contains the environment $(k_A, k_B)$, the memory state, the last arm played and reward obtained.
We only consider agents that have a complete access to their memory, therefore $O$ is deterministic and simply projects $s$ by removing the environment information.
\begin{equation}
\begin{aligned}
    s &= \text{environment $H_A$ or $H_B$), memory state, last arm played (A or B) and last reward ($1$ or $-1$)} \\
    o &= \text{memory state, last arm played and last reward} \\
    a &= \text{arm to play, memory update}
\end{aligned}
\end{equation}

We define the function $q(s)$, which is 1 if the agent just played the "wrong" arm in state $s$, and 0 if he played the correct one. The probability to play the wrong arm is then $q_\pi = \mathbb{E}_{s\sim p}[q(s)]$ where $p$ is the steady state of the problem with reset $r$.
The expected sum of discounted gains $G_\pi$ can be related to $q_\pi$ as: $  G_\pi = \frac{\mu}{r} (1 - 2 q_\pi)
$. In the following, we will use $q_\pi$ as a measure of performance, trying to find a policy $\pi$ that minimizes   $q_\pi$ (and thus maximizes $G_\pi$).

\subsection{Types of  memory considered}

\begin{definition}[Random Access Memory (RAM)] \label{def:ram}
RAM is the most flexible memory setting.
The agent has $M$ memory states and has full control over it.
It has $|A| = M \times 2$ possible actions: the choice of the next memory state and which arm to play.
There is  a high degeneracy in the space of strategies since any permutation of the memory states leads to the same performance. 
Note that, since our agent can use the information of the last arm and reward, the total number of memory states is in fact $M_{\mathrm{eff}}=4M$, which corresponds to $2 + \log_2 M$ bits.
\end{definition}

\begin{definition}[Memento Memory\footnote{\textit{Memento} is a Christopher Nolan's film where the hero has a short-term memory loss every few minutes. Using photos and tattoos, the  hero keeps track of information he will eventually forget, thus encoding information into his own actions.}] \label{def:Memento}
The agent only has access to the information of its past $m$ actions and rewards. For instance, for $m=4$, an observation could be \texttt{AABB++-+}.
We use the notation of the most recent action/reward on the right (here, the last action was B and the reward was $+1$).
If the agent plays A and obtains a positive reward, the next memory state would be \texttt{ABBA+-++}.
In this memory architecture, the agent writes in its memory only through the plays he does ($|A|=2$).
Here, the number of bits is $2m$ and the total number of memory states is in fact $M_{\mathrm{eff}}=4^m$.
\end{definition}

\section{Gain and exploration with a Random Access Memory}

\cite{Hellman1970,hellman_cover_bandit} described an optimal policy for the RAM architecture in the limit of a small reset $r$.
In their optimal policy, the memory states $i=1...M$ are organized linearly with transitions only occurring between $i$ and $i-1$ (if the last observation supports $H_A$)  or $i+1$ (if the last observation supports $H_B$).  In the limit $r\to 0$, transition probabilities can be shown to be independent on $i$ for $1<i<M$. The two extreme memory states $i=1$ and $i=M$ are special as they present a vanishing exit rate $\epsilon\rightarrow 0$. Thus,  only these two states are visited with finite probability in that limit. For such a policy, one obtains an optimal probability to play the worst arm $q_{\mathrm{HC}}(M) = \alpha^{M-1} / (\alpha^{M-1} + 1)$, with $\alpha = (1-\mu)/(1+\mu)$. \footnote{To see why a linear policy is optimal,  introduce  $\lambda_i=P(H_A|i)/P(H_B|i)$,  with $P(H_A|i)$   the conditional probability that  $H_A$ is true if the memory state $i$ is visited. Choose the labels $i$ such that $\lambda_1\geq \lambda_2...\geq\lambda_M$. Then it can be shown that $\lambda_i/\lambda_{i+1}\leq\alpha^{-1}$ (\cite{Hellman1970}).
The linear policy saturates this bound for all $i$, leading to the maximal ratio that can be obtained between any two states $\lambda_1/\lambda_M=\alpha^{1-M}$. This ratio controls the gain in the limit $\epsilon\rightarrow 0$ where only these two states are visited with finite probabilities.}
\begin{figure}[ht]
    \centering
    \def\svgwidth{\linewidth}
    \import{figures/}{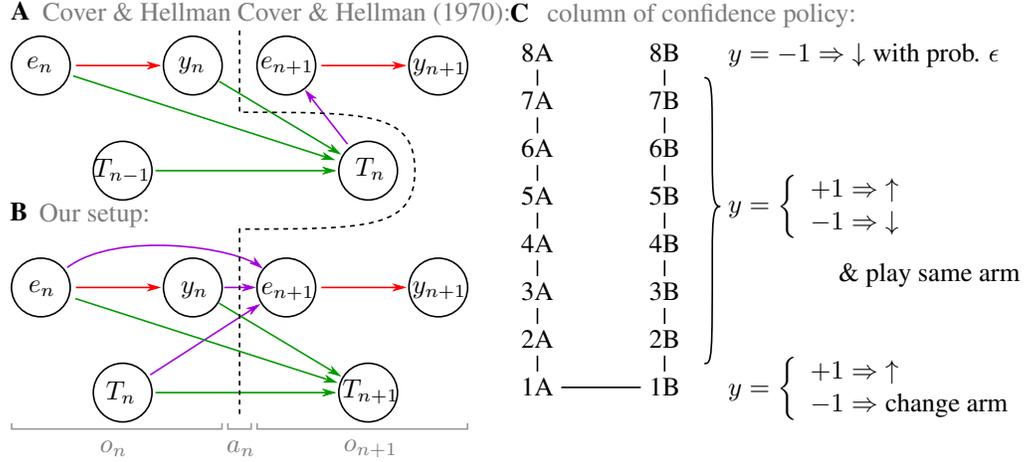}
    \caption{
\textbf{A} Update scheme from \cite{hellman_cover_bandit} vs \textbf{B} our update scheme.
Using the same notation as \cite{hellman_cover_bandit}, $y_n$ is the reward obtained by playing the arm $e_n$ and $T_n$ is the memory state (not to confuse with the transition probability $T$ of Sec~\ref{sec:general_formul}).
To make the link with Sec~\ref{sec:general_formul} in our setup the state $s_n$ correspond to the tuple $(e_n, y_n, T_n, k_A, k_B)$, the observation $o_n$ to the triplet $(e_n, y_n, T_n)$ and the action $a_n$ is represented by the purple and green arrows. The red arrow can be seen as part of the conditional probability $T(s_{n+1}|s_n,a_n)$.
\textbf{C} The column of confidence policy, that can be used in the RAM case, exemplified for $M=8$. The memory states are organized into two columns. 
The distribution $p_0$ initializes the agent's memory into states $1A$ or $1B$ with equal chance.
The agent keeps playing the same arm, moving up and down a column depending on the reward, unless it is in the memory state 1 (it then switches arm after a negative reward). 
Once the agent reaches a state at the top of a column, it can only step down with small probability $\epsilon$ if a negative reward is obtained. 
This two-column arrangment effectively double the number of memory states, by creating $2M=16$ distinct states.
In this policy, the value of the memory can be viewed as a measure of the confidence in the arm being played.
}
    \label{fig:u}
\end{figure}

Our set-up is slightly different, as we allow for the choice of arm to depend on both the memory state and the information of the last arm played and reward obtained  (\figref{fig:u}\textbf{A-B}). In that case, the policy can be improved, as demonstrated by considering the {\it column of confidence} policy. 

\begin{definition}[column of confidence policy (CCP)] \label{def:cc_policy}
It is a RAM policy with $M$ memory states. It is depicted in \figref{fig:u}\textbf{C}.
Essentially, the agent uses its last arm played to effectively increase the size of its memory by a factor 2.
\end{definition}

The probability $q$ to play the worst arm by following CCP is derived in \appref{app:exact} for general $r$, by writing  the transition probability matrix $T_\epsilon$ with a generic $\epsilon$, for any of the two hypotheses $H_A$ or $H_B$.  
From the stationary distribution $T_\epsilon \vec p = \vec p$, one obtains  $q(\epsilon)$, which reaches its minimum value for:
\begin{equation}
\epsilon = \sqrt{2} \frac{
    \sqrt{
        \alpha  
        -
        \alpha^{3M-1}
        + 
        \alpha^M (2\mu - 3)
        + 
        \alpha^{2M} (2\mu + 3)
    }
}{
    \sqrt{1 - \alpha^M} (1 - \alpha^M - \mu - \alpha^M \mu)
} \sqrt{r}
    + \mathcal{O}(r)
\end{equation}
The result $\epsilon\sim\sqrt{r}$ indicates a non-trivial balance between exploration and exploitation, in which the time spent in each extreme state of the memory
grows only as the square root of the horizon time $1/r$. A similar result was obtained for hypothesis testing (\cite{Wilson2014}).

For $r\rightarrow 0$ (a result generalized for any $k_B$ and $k_A$ in \appref{app:exact}), we obtain:
\begin{equation}
    q_{\mathrm{CCP}}(M) = \frac{\alpha^{2M-1}}{\alpha^{2M-1} + 1}.
\end{equation}
Note that it  is the optimal gain of a RAM memory of size $2M$. Since our agent memory size is $M_{\mathrm{eff}} = 4M$ (the factor 4 coming from the 2 possible past actions and two possible rewards),
the results of \cite{Hellman1970,hellman_cover_bandit} show that CCP is nearly optimal, in the sense that no policies with one less bit of memory can do better. In \figref{fig:linear_fit}, we confirm empirically that it is at least a local policy optimum, as performing policy optimization near this solution leads to no further improvements.

\begin{figure}
    \centering
    \import{figures/}{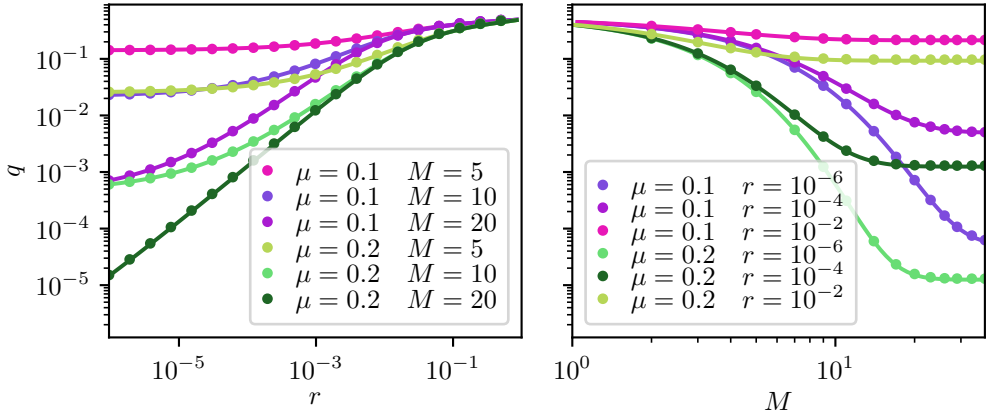}
    \caption{
\textbf{Left} Probability to play the worst arm $q$ {\it vs.} the reset probability $r$ for different memory sizes $M$ and two values of $\mu$ (the difference between the mean outcome of the two arms).
\textbf{Right}  $q$ {\it vs.}  memory size $M$ for different  $r$ and  $\mu$ indicated in legend.
The solid lines show the  analytical result corresponding to column of confidence policy (CCP),
whereas symbols show the results of the learning algorithm (see Sec~\ref{sec:algo}) for a RAM memory with a policy initialized close to the predicted optimal column of confidence policy ($\pi_0 \approx \pi_{\mathrm{CCP}} + 10^{-4}$). 
Learning does not find strategies that perform better than the optimal column of confidence policy, even for large values of $r$, indicating that it is a local optimum.
In this figure, error bars would be smaller than the symbols.
    }
    \label{fig:linear_fit}
\end{figure}

\section{Gain for Memento memory}
Are there efficient policies when the agent memorizes the last $m$ arms played and rewards obtained (Memento memory cf. \ref{def:Memento})?
In the classical two-arm bandit problem, after a time $m$, the optimal strategy selects the worst arm with probability  $q = O( \ln m / m)$ (\cite{auer2002finite}).
It turns out that our agent can use his own actions to encode events over a time much longer than $m$, leading to $q$ exponentially small in $2^m$ in a stationary state with long horizon $r\rightarrow 0$.

\begin{figure}[ht]
    \centering
    \includegraphics[width=\textwidth]{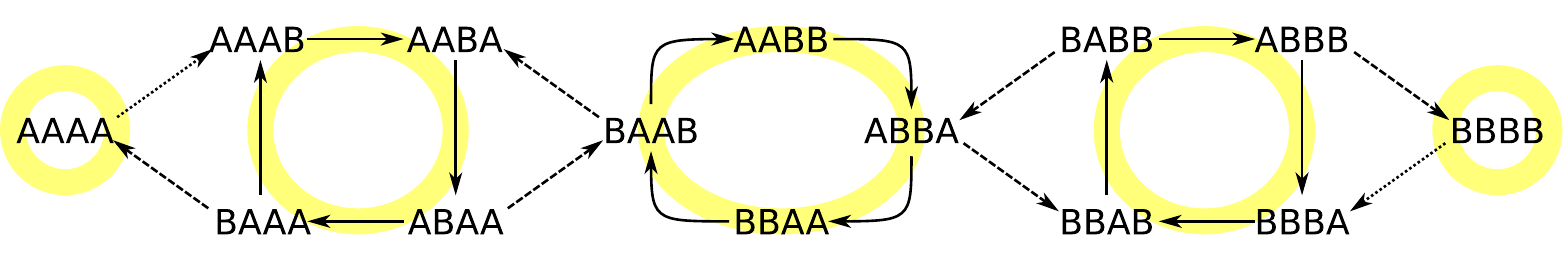}
    \caption{
Necklace policy with memory of the $m=4$ last arms played and  rewards obtained (Memento memory cf. \ref{def:Memento}).
Memory states are organized into 3 (5 if we consider the two end states) cycles of arms played, called necklaces in combinatorics.
The memory state also contains the rewards obtained, but these are not explicitly shown here for the sake of readability.
Most of time, the agent stays in the same necklace by playing the oldest action he remembers.
Each necklace has 2 inputs and 2 outputs states (some states can be both input and output).
The agent has a finite probability to leave its current necklace only when the output state is maximally informative: all 4 rewards are $+$ for A and $-$ for B to move to the left, or the opposite to move to the right.
Thicker arrows represent high probability transition (deterministic in some situations);
dashed arrows represent input/output transitions between necklaces;
and thin arrows represent the small probabilities to leave the two end states.
    }
    \label{fig:action_cycles}
\end{figure}

\begin{definition}[Necklace policy] \label{def:necklace_policy}
The necklace policy is based on 4 key ingredients (\figref{fig:action_cycles}). 

(i) Most of the time, the agent plays the oldest action in its memory (i.e. the arm played $m$ actions before). Doing so, it memorizes actions cycle inside binary necklaces of length $m$ (in combinatorics, a necklace is an equivalent class of character strings under cyclic rotation, here the strings are words of length $m$ made of the letters A and B, hence binary). 
When $m$ is prime, there are exactly $N(m) =2 + (2^m-2)/m$ distinct necklaces. For any $m$, the number of necklaces can be derived from \cite{polya1937}'s enumeration theorem and is equal to $N(m) = \frac{1}{m} \sum_{d|m} \varphi(d) 2^{m/d}$, where $\varphi$ is the Euler's totient function. 

(ii) We provide a Gray order on the necklaces. It means that necklaces are numbered and two successive necklaces can only differ by one letter. We order the necklaces from $i=1$ for the necklace where all actions are A to $i=n(m)$ for the necklace where all actions are B.  The necklace $i=1$ (resp. $i=n(m)$) also corresponds to the maximum confidence in hypothesis $H_A$ (resp. $H_B$). In general, the longest possible chain of necklace, $n(m)$, is unknown and less than the total number $N(m)$ of necklaces. But, when $m$ is prime, it has been conjectured (and checked for $m\le 37$) that there exists a Gray order of all distinct necklaces (\cite{degni2007gray}): in other words, $n(m) = N(m) = 2 + (2^m-2)/m$, for $m$ prime. 

(iii) The probability to exit a necklace is zero, except for two exit configurations for which this probability is $\epsilon_1>0$ if two conditions are met. 
First, the memorized actions must allow the agent to switch from the necklace $i$ to the necklace $i-1$ or $i+1$ by taking a new action. Second, the sequence of rewards must be maximally informative: to switch to the necklace $i-1$ (i.e. gaining confidence in $H_A$), all rewards have to be $+1$ for the arm A and $-1$ for the arm B and the opposite to switch to the necklace $i+1$. 

(iv) In the two extreme states, the probability of exit is $\epsilon_0$
when all rewards are negative. 
Below, we consider the limit $\lim_{\epsilon_1\rightarrow 0} \lim_{\epsilon_0\rightarrow 0} q(\epsilon_0,\epsilon_1)$ of this strategy. This order of limits ensures that
only the extreme states are visited with a finite probability and that the agent cycles many times in each necklace before exit.
\end{definition}

In order to compute the optimal gain of the necklace policy we introduce the following two lemmas.

\begin{lemma} \label{lemma:itoj}
Assume a discrete random walk on a chain of sites indexed by $i$, with probabilities $r_i$ to step from $i$ to $i+1$ and $l_i$ to step from $i$ to $i-1$. Starting in site $i$, the probability to reach site $j+1$ ($i\leq j$) before site $i-1$ is
\begin{equation}
    P_{i\to j} = \left(1 + \frac{l_i}{r_i} + \frac{l_i}{r_i}\frac{l_{i+1}}{r_{i+1}} + \dots + \frac{l_i}{r_i} \dots \frac{l_j}{r_j}\right)^{-1}.
\end{equation}
\end{lemma}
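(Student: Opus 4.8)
The plan is to treat this as a standard gambler's-ruin / first-passage computation on a birth--death chain with absorbing boundaries placed just outside the window of interest. I would fix the two absorbing sites at $i-1$ and $j+1$, and for every site $k$ with $i-1\le k\le j+1$ define $h_k$ to be the probability, starting from $k$, of being absorbed at $j+1$ rather than at $i-1$. The quantity we want is $h_i = P_{i\to j}$, and the boundary conditions are $h_{i-1}=0$ and $h_{j+1}=1$.

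First I would write a first-step analysis for each interior site $i\le k\le j$:
\begin{equation}
    h_k = r_k\, h_{k+1} + l_k\, h_{k-1} + (1 - r_k - l_k)\, h_k ,
\end{equation}
where the last term accounts for a possible holding probability (the steps need not satisfy $r_k+l_k=1$). The holding term cancels, leaving $r_k(h_{k+1}-h_k)=l_k(h_k-h_{k-1})$. This is the key simplification: the answer depends only on the ratios $l_k/r_k$, which is exactly why only these appear in the claimed formula.

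Next I would introduce the increments $d_k := h_k - h_{k-1}$, so that the recurrence becomes the geometric relation $d_{k+1} = (l_k/r_k)\, d_k$. Iterating gives, for $i\le k\le j$,
\begin{equation}
    d_{k+1} = d_i \prod_{m=i}^{k} \frac{l_m}{r_m}, \qquad d_i = h_i - h_{i-1} = h_i = P_{i\to j}.
\end{equation}
Summing the increments telescopes, $\sum_{k=i}^{j+1} d_k = h_{j+1} - h_{i-1} = 1$, and substituting the expression above yields
\begin{equation}
    P_{i\to j}\left(1 + \frac{l_i}{r_i} + \frac{l_i}{r_i}\frac{l_{i+1}}{r_{i+1}} + \dots + \frac{l_i}{r_i}\cdots\frac{l_j}{r_j}\right) = 1 ,
\end{equation}
which is the claimed identity after dividing through by the bracketed factor.

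The computation itself is elementary; the only point requiring a little care is to justify that the walk is absorbed at one of the two boundaries with probability one, so that $h_k$ is well defined and the probabilities of absorption at $i-1$ and at $j+1$ sum to one. On the finite chain between $i-1$ and $j+1$ with $r_k,l_k>0$ this is immediate, since every interior site communicates with both absorbing boundaries, so I do not expect a genuine obstacle here; I would simply note it and then carry out the telescoping sum above.
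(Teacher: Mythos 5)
Your proof is correct, and it reaches the formula by a genuinely different route than the paper. The paper fixes the target $j+1$ and recurses on the starting site: conditioning on the first move, and on whether the walker launched from $i+1$ ever returns to $i$, it obtains the nonlinear recursion $P_{i\to j} = p_i P_{i+1\to j}/\bigl(1 - p_i(1-P_{i+1\to j})\bigr)$ with $p_i = r_i/(l_i+r_i)$, and then linearizes it via the odds substitution $X_{i\to j} = (1-P_{i\to j})/P_{i\to j}$, which obeys $X_{i\to j} = (l_i/r_i)(1+X_{i+1\to j})$ and unrolls into the stated sum of products. You instead solve the classical gambler's-ruin boundary-value problem: $h_k$ harmonic in the interior with $h_{i-1}=0$ and $h_{j+1}=1$, geometric increments $d_{k+1}=(l_k/r_k)\,d_k$, and a telescoping sum. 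Both treatments handle the holding probability $1-l_k-r_k$ correctly, and both reduce everything to the ratios $l_k/r_k$. Your linear route is the more standard textbook argument and yields $h_k$ for every starting site simultaneously, avoiding both the renewal dichotomy (``returns to $i$ or not'') and the odds trick; the paper's route works with a single unknown per index but must untangle a M\"obius-type recursion by the change of variables. One small remark: your closing concern about almost-sure absorption is not actually load-bearing for the argument as you structured it --- the first-step identity for $h_k$ holds by conditioning on the first move even without a.s.\ absorption (unabsorbed trajectories contribute zero to the event), and the boundary values plus the telescoping identity then pin down $h_i$ directly; a.s.\ absorption (which does hold on the finite chain once the $r_k$ are positive, as you note) would only be needed if one further wanted the absorption probabilities at the two ends to sum to one, which your derivation never uses.
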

\begin{proof}
The proof is developed in \appref{sec:proof_lemma1}.
\end{proof}

\begin{lemma}
Assume a discrete random walk on a chain of $n+2$ sites indexed by $i=0, 1, \dots n+1$ (with probabilities $r_i$ to step to the right and $l_i$ to the left). If $r_0 = \epsilon R$ and $l_{n+1} = \epsilon L$, in the limit $\epsilon \to 0$, the probability to be on site $n+1$ is
\begin{equation}
    p(n+1) = \left(1 + \frac{L}{R} \prod_{i=1}^n \frac{l_i}{r_i}\right)^{-1}.
\end{equation}
\end{lemma}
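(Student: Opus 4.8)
The plan is to exploit the fact that this is a birth--death chain, i.e.\ a nearest-neighbor random walk on a line, whose stationary distribution is reversible. First I would establish that the unique stationary distribution $p$ satisfies detailed balance across each edge. Consider the cut separating the sites $\{0,\dots,i\}$ from $\{i+1,\dots,n+1\}$: in steady state the net probability flux across this cut must vanish, and since the only transitions crossing it are $i\to i+1$ (carrying weight $p(i)\,r_i$) and $i+1\to i$ (carrying weight $p(i+1)\,l_{i+1}$), this gives
\[
p(i)\, r_i = p(i+1)\, l_{i+1}, \qquad i=0,1,\dots,n.
\]
For any $\epsilon>0$ all the relevant rates are positive, so the chain is irreducible and the stationary distribution exists and is unique, which legitimizes the argument.

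Iterating this recurrence yields $p(i) = p(0)\prod_{k=0}^{i-1} r_k/l_{k+1}$. The crux is then to track how the factor $\epsilon$ enters. Because $r_0 = \epsilon R$, every interior probability $p(1),\dots,p(n)$ carries exactly one factor of $\epsilon$, so $p(i) = \mathcal{O}(\epsilon)\,p(0)$ for $1\le i\le n$. For the far endpoint, by contrast, the single $\epsilon$ from $r_0$ in the numerator cancels against the $\epsilon$ from $l_{n+1}=\epsilon L$ in the denominator, leaving
\[
p(n+1) = p(0)\,\frac{R}{L}\prod_{i=1}^{n}\frac{r_i}{l_i},
\]
which is of order $p(0)$ and therefore survives the limit.

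Finally I would take $\epsilon\to 0$ in the normalization $\sum_{i=0}^{n+1}p(i)=1$. The interior contribution $\sum_{i=1}^n p(i)$ is $\mathcal{O}(\epsilon)$ and drops out, so the two endpoints carry all the probability: $p(0)+p(n+1)=1$. Substituting the ratio $p(0) = p(n+1)\,(L/R)\prod_{i=1}^n l_i/r_i$ obtained above and solving the resulting linear equation for $p(n+1)$ produces the claimed formula.

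The step requiring the most care is the limit argument rather than the algebra: one must verify that the interior occupation probabilities genuinely vanish (not merely term by term but so that the whole interior sum is negligible), so that normalization collapses onto the two endpoints, while the endpoint ratio stays finite. This is precisely the quantitative content of the statement that ``only the extreme states are visited with finite probability,'' invoked for both the column-of-confidence and necklace policies, which is why the $\epsilon$-bookkeeping is the heart of the proof.
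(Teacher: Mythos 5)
Your proof is correct, and it takes a genuinely different route from the paper's. The paper establishes the lemma by balancing the flux of complete traversals between the two metastable end states, $p(0)\,\epsilon R\, P_{1\to n} = p(n+1)\,\epsilon L\, P_{1\leftarrow n}$, where the first-passage probabilities come from the preceding gambler's-ruin lemma, and then uses the identity $P_{1\leftarrow n}/P_{1\to n} = \prod_{i=1}^n l_i/r_i$ to obtain the stated formula. You instead exploit the birth--death structure directly: summing the stationarity equations over $\{0,\dots,i\}$ kills the net flux across each cut, and since only the nearest-neighbor transitions $i\to i+1$ and $i+1\to i$ cross it, exact detailed balance $p(i)\,r_i = p(i+1)\,l_{i+1}$ holds for every $\epsilon>0$. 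This is more elementary and in one respect stronger than the paper's argument: it gives the exact stationary distribution at finite $\epsilon$, makes the endpoint ratio $p(n+1)/p(0) = (R/L)\prod_{i=1}^n r_i/l_i$ exactly $\epsilon$-independent (the two factors of $\epsilon$ cancel identically), and reduces the limit to the observation that each of the $n$ interior masses carries exactly one factor of $\epsilon$, so with $n$ and the interior rates fixed the interior sum is $\mathcal{O}(\epsilon)$ uniformly and normalization collapses onto the endpoints --- your final worry about the interior sum is thus already settled by your own bookkeeping. Notably, your route makes the paper's first lemma unnecessary for this result. What the paper's traversal-flux argument buys in exchange is robustness: it rests only on the two end states being the sole states of finite occupation and on first-passage probabilities through the interior, not on reversibility of a strictly one-dimensional walk, which is closer in spirit to how the lemma is deployed for the necklace policy, where each ``site'' is a coarse-grained necklace with internal cycling and exits occur only from particular configurations.
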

\begin{proof}
The proof is developed in \appref{sec:proof_lemma2}.
\end{proof}

Using these two lemma the following theorem can be proved.

\begin{theorem} \label{theorem:necklace_gain}
Consider the two-hypothesis problem described in \eqref{00} in the limit of small reset $r\rightarrow 0$. The necklace policy described in Definition~\ref{def:necklace_policy} with parameters $(\epsilon_0, \epsilon_1)$ has a probability $q$ to play the worst arm satisfying $q\geq q^*$, with:
\begin{equation} \label{eq:bound_qstar}
    q^* =
    \left( 1 + \alpha^{m(1 - n(m))} \right)^{-1}
    \quad \mbox{with }
    \alpha = \frac{1-\mu}{1+\mu}.
\end{equation}
The optimal necklace policy converges to $q^*$ when $r\ll\epsilon_0\ll\epsilon_1\ll 1$.
\end{theorem}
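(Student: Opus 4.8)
The plan is to model the necklace policy as an effective one-dimensional random walk on the chain of necklaces indexed $i = 1, \dots, n(m)$, and then apply the two lemmas directly. The key conceptual step is the \emph{reduction}: because the exit probabilities obey the order of limits $\lim_{\epsilon_1 \to 0}\lim_{\epsilon_0\to 0}$, inside each necklace the agent cycles many times before it ever exits, so the fast internal dynamics (playing the oldest action, moving around the cyclic string of length $m$) equilibrate and can be coarse-grained away. What survives is a slow walk between adjacent necklaces $i \to i\pm 1$. First I would make this coarse-graining precise: identify, for each necklace $i$, the effective rightward rate $r_i$ (probability to transition to necklace $i+1$) and leftward rate $l_i$ (to necklace $i-1$), accounting for the fact that a transition only happens from a maximally informative output state, and only with probability $\epsilon_1$.

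The heart of the computation is to evaluate the ratio $l_i/r_i$ for the interior necklaces. By ingredient (iii), exiting necklace $i$ toward $i-1$ requires the sequence of $m$ rewards to be maximally informative in favor of $H_A$ (all $+1$ on arm A, all $-1$ on arm B), and toward $i+1$ the opposite. Under hypothesis $H_A$, each such ``correct'' reward has probability $(1+\mu)/2$ and each ``wrong'' one $(1-\mu)/2$; the ratio of the two $m$-fold products gives exactly a factor $\alpha^{m}$ per step, with $\alpha = (1-\mu)/(1+\mu)$. I would argue that because both output configurations involve the same number $m$ of constrained rewards, the common $\epsilon_1$ and combinatorial prefactors cancel in the ratio $l_i/r_i$, leaving $l_i/r_i = \alpha^{m}$ for every interior $i$ (the step-independence mirroring the $r\to 0$ RAM analysis in the Hellman--Cover footnote). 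Then $\prod_{i=1}^{n(m)-1} l_i/r_i = \alpha^{m(n(m)-1)}$.

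With the interior ratios in hand, the two boundary necklaces $i=1$ and $i=n(m)$ play the role of the special end sites: by ingredient (iv) they leak out only with the vanishing rate $\epsilon_0$, so in the limit $\epsilon_0 \to 0$ (taken before $\epsilon_1 \to 0$) only these two states carry finite stationary probability. This is precisely the setting of Lemma~2 with $n = n(m)-2$ interior sites: substituting $\prod l_i/r_i = \alpha^{m(n(m)-1)}$ and the appropriate boundary factors $L/R$ gives the stationary weight on the extreme necklace, and hence
\begin{equation}
    q^* = \left(1 + \alpha^{m(1-n(m))}\right)^{-1},
\end{equation}
where Lemma~1 is used to confirm that from any interior necklace the agent indeed reaches one extreme before returning, justifying the reduction to the two-state description. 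The inequality $q \ge q^*$ would follow by noting that any finite $\epsilon_0, \epsilon_1$ only increases the mixing between confident states, so $q^*$ is the infimum attained in the stated ordered limit.

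The main obstacle I anticipate is establishing the step-independence $l_i/r_i = \alpha^m$ rigorously, rather than heuristically. The necklaces have different lengths of orbit and different numbers of distinct representative states depending on their cyclic symmetry (the Euler-totient structure in $N(m)$), so the internal equilibrium distribution over states within a necklace is not uniform and must be computed before reading off the output rates. Verifying that these within-necklace weights conspire so that the \emph{ratio} of exit rates is cleanly $\alpha^m$ — independent of $i$ and of the necklace's period — is the delicate part, and it is exactly where the Gray-ordering conjecture of \cite{degni2007gray} enters, since the existence of the length-$n(m)$ chain with single-letter adjacent differences is what guarantees each interior necklace has the required pair of maximally-informative output states connecting it to both neighbors.
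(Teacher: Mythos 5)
Your proposal is correct and follows essentially the same route as the paper's proof in Appendix~\ref{sec:proof_theorem43}: coarse-grain the dynamics to a biased walk on the Gray-ordered chain of necklaces, observe that every interior exit-rate ratio $l_i/r_i = k_A^{a_i}(1-k_B)^{b_i}/\left((1-k_A)^{a_i}k_B^{b_i}\right)$ equals $\alpha^{-m}$ independently of $i$ (since $a_i+b_i=m$ and $k_A=1-k_B$ under \eqref{00}; your $\alpha^{m}$ is the same statement in the mirrored orientation), and insert this together with the end-state rates $\epsilon_0(1-k_B)^m$ and $\epsilon_0(1-k_A)^m$ into the chain-with-final-states formula \eqref{eq:general_q_formula} under the ordered limits $r\ll\epsilon_0\ll\epsilon_1\ll 1$. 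The one inaccuracy is your anticipated obstacle: no within-necklace equilibrium computation is needed, since each loop visits each exit state exactly once and exiting requires the full $m$-step maximally informative reward pattern, so the per-loop exit probabilities are exactly $\epsilon_1 k_A^{a_i}(1-k_B)^{b_i}$ and $\epsilon_1(1-k_A)^{a_i}k_B^{b_i}$ regardless of the necklace's cyclic symmetry; correspondingly, the conjecture of \cite{degni2007gray} is not what makes the ratio $i$-independent---it enters only to guarantee that the chain can be made maximally long, $n(m)=2+(2^m-2)/m$ for $m$ prime, which is what yields the leading-order estimate $q^*\sim\alpha^{2^m}$.
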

\begin{proof}
The detailed proof is contained in \appref{sec:proof_theorem43}. 
\end{proof}

\paragraph{Note} At leading order $q^* = \alpha^{2^m} + o(\alpha^{2^m})$.
This is because the number of distinct necklaces $N(m)$ only differs from $2 + (2^m-2)/m$ at second order, and because we expect to find Gray orders within those distinct necklaces whose length only differ from $N(m)$ at second order.

\begin{figure}[ht]
    \centering
    \import{figures/}{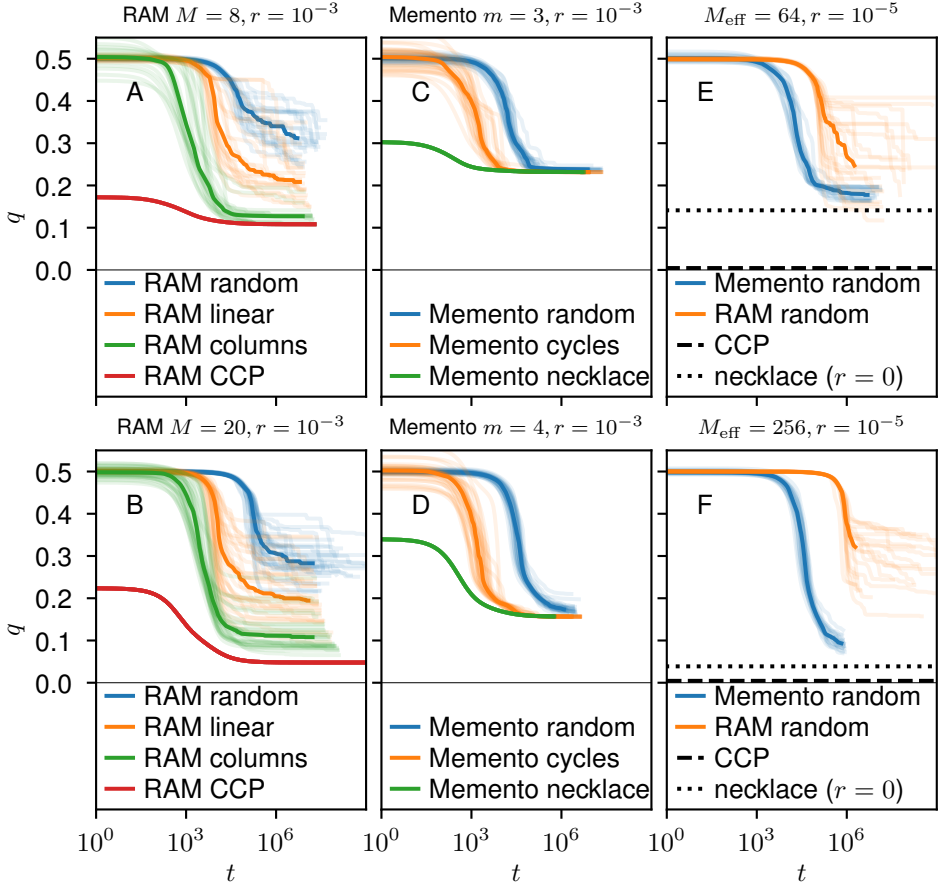}
    \caption{
Dynamics of the optimization algorithm for different initialization methods.
Probability to play the worse arm $q$ {\it vs.} algorithm time $t$ (time defined in \eqref{eq:flow}) for different seeds.
20 initialization seeds are shown in light color and the median is shown in solid color.
The wall time is caped to 1 hour per optimization.
\textbf{A-B} RAM with $M=8$ and $M=20$, we compare the \textit{random} initialization, the \textit{linear} initialization where the jumps in memory states are initialized to be contiguous, the \textit{columns} initialization corresponding to linear initialization with the extra constraint that the last action is repeated except for the memory state 1 and the \textit{CCP} initialization, very close to the column of confidence policy (i.e. $\pi_0 \approx \pi_\mathrm{CCP} + 10^{-4}$, a difference that explains why the red curves can decrease).
\textbf{C-D} $q$ {\it vs.} $t$ for the Memento memory $m=3$ and $m=4$. We compare the \textit{random} initialization with the \textit{cycles} initialization that repeats the oldest action, except if all the remembered plays correspond to a maximally informative event (during training a path between the cycles has to be learned) and the \textit{necklace} where $\epsilon_0$ and $\epsilon_1$ has to be learned.
\textbf{E-F} $q$ {\it vs.} $t$ for randomly initialized policies.
The values of $m$ (3 and 4) and $M$ (16 and 64) are chosen such that the total memory needed to perform these strategies $M_{\mathrm{eff}}$ is identical in each panel. 
The dashed line corresponds to calculation of $q$ for the CCP and for the necklace policy (for which we only have a prediction for $r=0$).
In this figure $\mu = 0.1$.
    }
    \label{fig:glassy}
\end{figure}

\section{Policy optimization and local minima}

To study empirically how learning depends on the memory architecture, we measure how the probability $q(t)$ to play the worse arm after a training time $t$ depends on the  initialization of policy. 
For the RAM memory, we find that random initialization (blue curves) leads to very poor results (panels A and B of \figref{fig:glassy}). 
Results however improve  when a linear structure for memory states is imposed (orange curves) and when the arms played are segregated on the two sides of that linear structure to form two columns (green). 
However, even in that case, training does not converge towards the optimal column of confidence parameters, unless parameters are initialized near the optimal values (red curves). 

By contrast, training with the Memento memory (consisting in the last $m$ actions and rewards, cf. \ref{def:Memento}) appears less sensitive to initialization. 
As shown in the panel C and D of \figref{fig:glassy}, initializing the policy randomly (blue) performs does not perform as well as initializing the policy with necklaces (orange), however the difference is not significant. 

Although the RAM architecture is in principle more flexible (and in fact include Memento memories), we find that, for random initialization, the Memento architecture leads to actually {\it better} policies after training. 
The comparison is shown in panels E and F of \figref{fig:glassy}, where the two memory architectutes are compared keeping the effective memory size $M_{\mathrm{eff}}$ constant. 
This finding emphasizes the need to constrain memory architecture, so as to obtain smoother optimization landscapes.

\section{Conclusion} \label{sec:conclusion}

\begin{table}[ht]
    \centering
    \begin{tabular}{lcc}\toprule
        & \multicolumn{2}{c}{Policies} \\ \cmidrule(lr){2-3}
        Memory scheme & Memento (cf.~\ref{def:Memento}) & RAM (cf.~\ref{def:ram}) \\
        Policy & necklace (cf.~\ref{def:necklace_policy}) & CCP (cf.~\ref{def:cc_policy}) \\
        Effective memory & $M_{\mathrm{eff}}=4^m$ & $M_{\mathrm{eff}}=4M$ \\
        Performance ($q^{-1}-1$) & $\alpha^{m(1-n(m))} \sim \alpha^{-2^m}$ & $\alpha^{-(2M-1)}$ \\
        ... as function of $M_{\mathrm{eff}}$ & $\sim \alpha^{-\sqrt{M_{\mathrm{eff}}}}$ & $\sim \alpha^{-M_{\mathrm{eff}}/2}$ \\
        ... more generally for $(k_A > k_B)$ & $\left(\frac{1-k_B}{1-k_A}\right)^m \left(\frac{1/k_B-1}{1/k_A-1}\right)^{m(n(m)-2)/2}$ & $\frac{1-k_B}{1-k_A} \left(\frac{1/k_B-1}{1/k_A-1}\right)^{M-1}$ \\ \bottomrule
    \end{tabular}

    \caption{Comparison of the  performances of the necklace and CCP strategies in the limit $r \to 0$. As shown in the last line, the gain of these policies can be generalized to any distribution of the Bernoulli probabilities $k_A$ and $k_B$ (but needs not be optimal then).
}
    \label{tab:my_label}
\end{table}

Our results are summarized in Table~\ref{tab:my_label} that compares the necklace policy (cf.~\ref{def:necklace_policy}) and the column of confidence policy (cf.~\ref{def:cc_policy}). 
For each of these policies, we provide the optimal performance, reached in the limit $r\rightarrow 0$.
We conjecture that these policies are the optimal ones for the  Memento and RAM memory schemes respectively.  
Concerning the Memento memory, this conjecture is supported by the simulations shown in \figref{fig:action_cycles}: the best numerical policies found for $m=3$ and $m=4$ are in fact the necklace policy.

An interesting additional questions for the future is the generalization of these ideas to a broader set of tasks. The CCP appears well-suited for multiple hypotheses testing (\cite{chandrasekaran1978finite,yakowitz1974multiple}), where it  would correspond  to a ``star'' policy with a branch for each hypothesis. Classifying optimal policies for more complex hierarchical tasks, such as those involved in navigation (\cite{theocharous2004representing,toussaint2008hierarchical}), would have practical applications. Looking ahead, it would be interesting to understand if these ideas have applications to other approaches dealing with POMDPs, including recurrent networks (\cite{li2015recurrent}) whose theoretical understanding remains very limited.

Finally, it is intriguing that for all memory structures studied, a linear organization of memory states appears to be optimal. 
Despite the fact that our set-up is intrinsically digital, optimal policies approach an analog memory architecture with a single degree of freedom: it corresponds to the position along the chain, and measures the relative belief of one hypothesis over the other.
In neuroscience, dominants models of decision making often present a single analogue variable being updated by observations (\cite{gold2007neural,rescorla1972theory}).
It would be interesting to test experimentally, in situations where the environment can change with a small probability $r$ between two distinct classes, if animals stick to two extreme believes, and leave them for exploration with some rate $\sim \sqrt{r}$.



\bibliography{iclr2022_conference}
\bibliographystyle{iclr2022_conference}

\newpage

\begin{appendices}

\section{Proof of Lemma \ref{lem:discounted_reward}} \label{app:lemma}
\begin{proof}
The state transition matrix of the undiscounted POMDP is
\begin{equation}\label{eq:Tprime}
    \tilde T_\pi(s'|s) = r p_0(s') + (1-r) T_\pi(s'|s).
\end{equation}
The steady state $p(s)$ has to be stable through $\tilde T_\pi$, thus satisfying $p(s') = \sum_s \tilde T_\pi(s'|s) p(s)$.
In the tensor form, it can be written $\vec p = r \vec p_0 + (1-r) T_\pi \vec p$.
Applying recursively this formula $n$ times we obtain:
\begin{equation}
    \vec p = r \sum_{t=0}^{n-1} (1-r)^t T_\pi^t \vec p_0 + (1-r)^n T_\pi^n \vec p.
\end{equation}
Translating it into an expectation value expression we obtain:
\begin{equation} \label{eq:exp_f}
    \mathbb{E}_{s\sim p}[f(s)] = r\, \mathbb{E}_{s_t \sim T_\pi^t p_0}\left[\sum_{t=0}^{n-1} (1-r)^t f(s_t)\right] + (1-r)^n \mathbb{E}_{s\sim T_\pi^n p}[f(s)]
\end{equation}
for any function $f$ and where $T_\pi p$ is understood as a matrix-vector product (note that $T_\pi p \neq p$).
By replacing $f$ by $R$ and by taking the limit $n\to \infty$, for $r=1-\gamma$, we can identify \eqref{eq:exp_f} with \eqref{Gpi}, thus obtaining the Lemma~\ref{lem:discounted_reward}.
\end{proof}

\section{Implementation details} \label{app:implem}

To compute the steady state we use the power method algorithm: Alg.\ref{power_method}.

When we have multiple independent environments (by environment we mean subset of $S$ that the agent cannot escape with its actions), $S$ is the disjoint union of these environments: $S = S_1 + S_2 + \dots$. If $p_0$ factorize as follow $p_0(s) = P(S_i) P(s|S_i)$ for $s\in S_i$, we can compute the steady state by computing those of each independent environments.

The initial state of the memory of the agent can be optimized by allowing gradient flow to modify specific part of $p_0$. It could mathematically be reformulated as special actions done on special initial states to initialize the memory.

\begin{algorithm}[hb]
 \KwData{A transition matrix $M$}
 \KwResult{The steady state}
 \While{the columns of $M$ differ (with a given tolerance)}{
  $M M \to M$\;
 }
 \Return a column of $M$\;
 \caption{Power method}
 \label{power_method}
\end{algorithm}

\section{Exact computation for column of confidence policy} \label{app:exact}
The \textit{mathematica} notebook is provided along with the code, see the link in \secref{sec:intro}.

Here we compute the optimal value of $\epsilon$ (that maximize the gain) given $r$ and $\mu$.

First observe that the states (\texttt{3B+}, \texttt{1A-}, ...) can be arranged along a line:
\begin{equation}
    \begin{array}{cccccccc}
        \mathrm{MA+} & \dots & \mathrm{2A+} & \mathrm{1A+} & \mathrm{1B+} & \mathrm{2B+} & \dots & \mathrm{MB+} \\
        \mathrm{MA-} & \dots & \mathrm{2A-} & \mathrm{1A-} & \mathrm{1B-} & \mathrm{2B-} & \dots & \mathrm{MB-} 
    \end{array}
\end{equation}
where the probability transition only occurs between two consecutive states.

If we merge the $\pm$ we have $2M$ states and we can compute their transition matrix without reset:
\begin{equation}
Q = 
\begin{bmatrix}
1\!-\!\epsilon k_B & k_A &     & \\
\epsilon k_B   & 0   & k_A & \\
               & k_B & 0   & \ddots \\
               &     & k_B & \ddots & k_A \\
               &     &     & \ddots & 0   & \epsilon k_A \\
               &     &     &        & k_B & 1\!-\!\epsilon k_A
\end{bmatrix}
\end{equation}
where $k_A = (1+\mu)/2$ and $k_B = (1-\mu)/2$

Including the reset, the transition probability becomes
\begin{equation}
    M_{ij} = (1-r) Q_{ij} + r J_{i}
\end{equation}
where $J$ is $1/2$ in the two central states \texttt{1A} and \texttt{1B}.

The steady state $p_i$ is the solution of 
\begin{equation}
    \label{ss_eq}
    \sum_j M_{ij} p_j = p_i \quad \Longleftrightarrow \quad \sum_j ((1-r) Q_{ij} - \delta_{ij}) p_j = -J_i
\end{equation}

We can split this problem in 5 regions: the A border, the A bulk, the center, the B bulk, the B border.
In the bulks the \eqref{ss_eq} is
\begin{equation}
    (1-r) k_B p_{i-1} - p_i + (1-r) k_A p_{i+1} = 0
\end{equation}

The solutions are of the form $p_i = c_1 w_+^i + c_2 w_-^i$ with
\begin{equation} \label{eq:wpm}
    w_\pm = \frac{1 \pm \sqrt{1 - (1-r)^2 (1-\mu^2)}}{(1-r)(1+\mu)}
\end{equation}

To fix the coefficients $c_1, c_2$ in the two bulks we have 6 equations: 
\begin{itemize}
    \item two on the left border (two first lines of \eqref{ss_eq})
    \item two in the center, at the injection (two middle lines of \eqref{ss_eq})
    \item two on the right border (two last lines of \eqref{ss_eq})
\end{itemize}

Solving these equations, we can compute the probability to play the wrong arm (B, assuming $\mu >0$).
\begin{equation} \label{eq:exact_q}
    q = \frac{
        \begin{array}{c}
            w_+ (1 + w_+) (1 - w_-) w_-^{2 M} (1 + w_+ (\epsilon-1)) (w_- + \epsilon - 1) \\
            + (w_+ \leftrightarrow w_-) \\
            + (w_- w_+)^M (w_- w_+ - 1) 
            (
                (w_- - 1) (w_+ - 1) (w_- + w_+) (\epsilon-1)
                + 
                2 w_- w_+ \epsilon^2
            ) \\
        \end{array}
    }{
        2 (w_- - w_+) 
        (
            w_+^{2 M} w_- (1 + w_- (\epsilon-1)) (w_+ + \epsilon - 1)
            -
            (w_+ \leftrightarrow w_-)
        )
    }
\end{equation}
in this expression of $q$ we expressed $r$, $\mu$ and $k_A$, $k_B$ in function of $w_\pm$

Optimizing $q$ with respect to $\epsilon$ leads to
\begin{equation} \label{eq:optimal_epsilon}
    \epsilon = (w_- w_+)^{-M} \frac{
    \begin{array}{c}
        (w_- - 1) w_-^M (w_+ - 1) w_+^M (w_- w_+ - 1) (w_-^M w_+ - w_- w_+^M)^2 \\
        - \sqrt{
            \begin{array}{c}
            (w_- - 1) (w_+ - 1) (w_- w_+)^{1 + 2 M} (w_- w_+ - 1)^2 (w_+^M - w_-^M) \\
            \left(
                \begin{array}{c}
                    w_-^{3 M} w_+^2 (1 + w_-) (1 + w_+) \\
                    -(w_+ \leftrightarrow w_-) \\
                    + w_+^{2 M} w_-^{1 + M} (2 w_+ + w_- + w_+ w_- (7 + 2 w_+) + w_-^2 (w_+ - 1)) \\
                    -(w_+ \leftrightarrow w_-) \\
                \end{array}
            \right)
            \end{array}
        }
    \end{array}
}{
    (w_- w_+ - 1)
    \left(
        \begin{array}{c}
            w_-^2 w_+^{2 M} (1 + w_- (w_+ - 1) + w_+) \\
            + (w_+ \leftrightarrow w_-) \\
            - 2 w_-^{1 + M} w_+^{1 + M} (1 + w_- w_+)
        \end{array}
    \right)
}
\end{equation}

We can make the Taylor expansion of $\epsilon$ with respect to $r$
\begin{equation} \label{eq:optimal_epsilon_taylor}
    \epsilon = \sqrt{2} \frac{
    \sqrt{
        \alpha  
        -
        \alpha^{3M-1}
        + 
        \alpha^M (2\mu - 3)
        + 
        \alpha^{2M} (2\mu + 3)
    }
}{
    \sqrt{1 - \alpha^M} (1 - \alpha^M - \mu - \alpha^M \mu)
} \sqrt{r}
    + \mathcal{O}(r)
\end{equation}
with $\alpha = \frac{1-\mu}{1+\mu}$

For $M$ large, it converge quickly toward
\begin{equation}
    \epsilon = \sqrt{\frac{2r}{1-\mu^2}} + \mathcal{O}(r)
\end{equation}

In the limit $r \to 0$ we get
\begin{equation}
    q = \frac{\alpha^{2M-1}}{\alpha^{2M-1} + 1} + \mathcal{O}(\sqrt{r})
\end{equation}

\section{Random walk along a chain} \label{app:random_walk_chain}
\subsection{Probability to traverse the chain}\label{sec:proof_lemma1}

\begin{figure}[ht]
    \centering
    \def\svgwidth{12cm}
\begingroup%
  \makeatletter%
  \providecommand\color[2][]{%
    \errmessage{(Inkscape) Color is used for the text in Inkscape, but the package 'color.sty' is not loaded}%
    \renewcommand\color[2][]{}%
  }%
  \providecommand\transparent[1]{%
    \errmessage{(Inkscape) Transparency is used (non-zero) for the text in Inkscape, but the package 'transparent.sty' is not loaded}%
    \renewcommand\transparent[1]{}%
  }%
  \providecommand\rotatebox[2]{#2}%
  \newcommand*\fsize{\dimexpr\f@size pt\relax}%
  \newcommand*\lineheight[1]{\fontsize{\fsize}{#1\fsize}\selectfont}%
  \ifx\svgwidth\undefined%
    \setlength{\unitlength}{427.5bp}%
    \ifx\svgscale\undefined%
      \relax%
    \else%
      \setlength{\unitlength}{\unitlength * \real{\svgscale}}%
    \fi%
  \else%
    \setlength{\unitlength}{\svgwidth}%
  \fi%
  \global\let\svgwidth\undefined%
  \global\let\svgscale\undefined%
  \makeatother%
  \begin{picture}(1,0.20542072)%
    \lineheight{1}%
    \setlength\tabcolsep{0pt}%
    \put(0,0){\includegraphics[width=\unitlength,page=1]{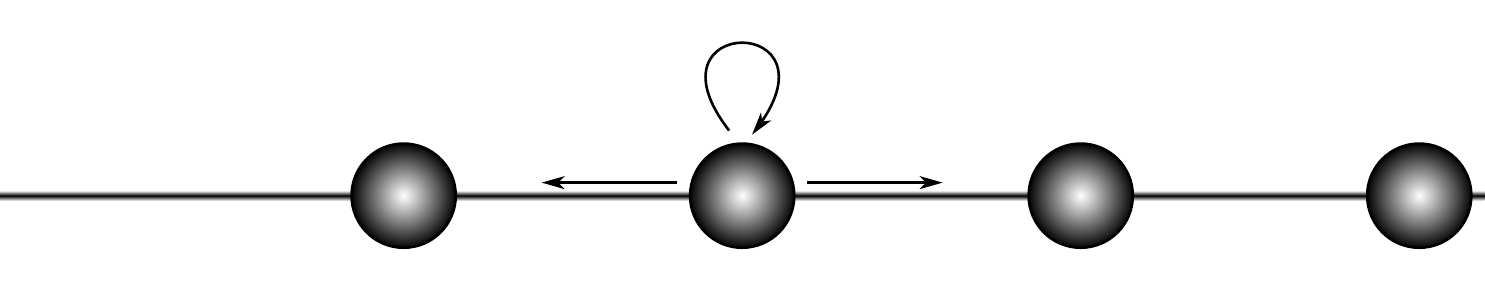}}%
    \put(0.50000003,0.18764847){\makebox(0,0)[t]{\lineheight{1.25}\smash{\begin{tabular}[t]{c}$1-(l_i+r_i)$\end{tabular}}}}%
    \put(0.58771932,0.09115729){\makebox(0,0)[t]{\lineheight{1.25}\smash{\begin{tabular}[t]{c}$r_i$\end{tabular}}}}%
    \put(0.41228071,0.09115729){\makebox(0,0)[t]{\lineheight{1.25}\smash{\begin{tabular}[t]{c}$l_i$\end{tabular}}}}%
    \put(0.50000003,0.00343795){\makebox(0,0)[t]{\lineheight{1.25}\smash{\begin{tabular}[t]{c}$i$\end{tabular}}}}%
    \put(0,0){\includegraphics[width=\unitlength,page=2]{chain.pdf}}%
    \put(0.72807019,0.00343795){\makebox(0,0)[t]{\lineheight{1.25}\smash{\begin{tabular}[t]{c}$i+1$\end{tabular}}}}%
    \put(0.27192984,0.00343795){\makebox(0,0)[t]{\lineheight{1.25}\smash{\begin{tabular}[t]{c}$i-1$\end{tabular}}}}%
  \end{picture}%
\endgroup%

    \caption{
    Markov chain
    }
    \label{fig:chain}
\end{figure}

Consider a random walk on a chain of sites with probabilities $r_i$ to move from site $i$ to $i+1$ and $l_i$ to move from $i$ to $i-1$ (\figref{fig:chain}). 
First, we want to compute the probability $P_{i\to j}$ (with $i\leq j$) that, starting at site $i$, the walker visits the site $j+1$ before the site $i-1$. 
Note that $P_{i\to j}$ only depends on $\{l_k\}_{k=i}^j$ and $\{r_k\}_{k=i}^j$.
Note also that according to this definition $P_{i\to i} = \frac{r_i}{l_i + r_i}$.
We also define $P_{i\leftarrow j}$ as the probability to reach $i-1$ before $j+1$ by starting in $j$, then we have $P_{i\leftarrow i} = \frac{l_i}{l_i + r_i}$.

Starting in $i$, after one time step the walker is either (i) in $i-1$ (with probability $l_i$) and the probability to reach $j+1$ before reaching $i-1$ becomes null, (ii) still in $i$ (with probability $1-(l_i+r_i)$) and the probability to each $j+1$ before $i-1$ is still given by $P_{i\to j}$, (iii) in $i+1$ (with probability $r_i$). Once in $i+1$ there are two possibilities: either the walker never comes back to $i$ and it reaches $j+1$ (with probability $P_{i+1\to j}$), or it does (with probability $1-P_{i+1\to j}$) and it again has the same probability $P_{i\to j}$ to reach $j+1$ before $i-1$. Thus we have:
\begin{equation}
    P_{i\to j} = (1 - (l_i + r_i)) P_{i\to j} + r_i \left( P_{i+1\to j} + (1 - P_{i+1\to j}) P_{i\to j} \right).
\end{equation}
The quantities that matter are $p_i \equiv r_i / (l_i + r_i)$.
If we isolate $P_{i\to j}$ to the l.h.s. we get
\begin{equation}
    P_{i\to j} = \frac{p_i P_{i+1\to j}}{1 - p_i (1 - P_{i+1\to j})}.
\end{equation}
Making the changes of variable $X_{i\to j} \equiv \frac{1 - P_{i\to j}}{P_{i\to j}}$ and $x_i \equiv \frac{1-p_i}{p_i}$ (note that $x_i = l_i/r_i$) we obtain
\begin{equation}
    X_{i\to j} = x_i (1 + X_{i+1\to j}).
\end{equation}
By repeating the formula we see that we get
\begin{equation}
    X_{i\to j} = \sum_{k=i}^j \prod_{l=i}^k x_l
    = x_i + x_i x_{i+1} + x_i x_{i+1} x_{i+2} + \dots + x_{i} \cdots x_j
\end{equation}
from which we can get $P_{i\to j}$ by the inverse transformation 
\begin{equation}
    P_{i\to j} = \left(1 + \frac{l_i}{r_i} + \frac{l_i}{r_i}\frac{l_{i+1}}{r_{i+1}} + \cdots + \frac{l_i}{r_i}\cdots\frac{l_{j}}{r_{j}}\right)^{-1}.
\end{equation}

\subsection{Chain with final states}\label{sec:proof_lemma2}

Let us consider the same chain as above with a finite length $n$ such that sites are labelled from $i=1$ to $n$. Now let us add a final state at each end of the chain (sites $i=0$ and $n+1$).
We consider the probabilities to leave the final states asymptotically small, of order $\epsilon$.
More precisely, the probability to move from $i=0$ to $1$ is $\epsilon R$ (we  call it $R$ as it is a probability to go to the Right, although it concerns the most left site) and the probability to move from $n+1$ to $n$ is $\epsilon L$.
These probability and the probabilities $p(0)$ and $p(n+1)$ to be on the site $0$ and $n+1$ are related by a balance of the flow from $0$ to $n+1$:
\begin{equation}
    p(0) \epsilon R P_{1\to n} = p(n+1) \epsilon L P_{1\leftarrow n}.
\end{equation}
In the limit $\epsilon \to 0$, the probabilities to be in the extreme states tends to $1$ and we thus have $p(0)=1-p(n+1)$, yielding
\begin{equation}
    p(n+1) = \left(1 + \frac{L}{R} \frac{P_{1\leftarrow n}}{P_{1\to n}}\right)^{-1}.
\end{equation}
This result can be simplified using the equality
\begin{equation}
    \frac{P_{1\leftarrow n}}{P_{1\to n}} = \frac{
    1 + \frac{l_1}{r_1} + \dots + \frac{l_1}{r_1}\dots\frac{l_n}{r_n}
    }{
    1 + \frac{r_n}{l_n} + \dots + \frac{r_n}{l_n}\dots\frac{r_1}{l_1}
    } = \prod_{i=1}^n \frac{l_i}{r_i},
\end{equation}
to finally obtain the formula
\begin{equation} \label{eq:general_q_formula}
    p(n+1) = \left(1 + \frac{L}{R} \prod_{i=1}^n \frac{l_i}{r_i}\right)^{-1}.
\end{equation}

\subsection{Chain of necklaces}\label{sec:proof_theorem43}

To compute the gain of the necklace policy, the idea is to show that necklaces can be arranged on a chain so that we can use \eqref{eq:general_q_formula} (see \figref{fig:action_cycles}).

For $m$ prime, the number of non trivial necklaces is $(2^m - 2)/m$, the trivial necklaces being the two  final states (i.e., the words \texttt{AAA..A} and \texttt{BBB..B}).
Using the conjecture of \cite{degni2007gray}, there is a Gray order on these necklaces when $m$ is prime. 
In other words, there is a chain from one final state to the other that passes exactly once by each necklace, the difference between two successive necklaces being exactly one bit (i.e. a single \texttt{A} is changed into \texttt{B} or vice versa).

In any case ($m$ prime or not), we call $n(m)$ the length of the longest chain with Gray order. We call $y(m)$ the length of the smallest necklace in that longest chain (arguably the smallest prime factor of $m$).

A necklace is characterized by the numbers $a$ and $b$ of letters \texttt{A} and \texttt{B} in the m-long word, with $a+b=m$.
After at least one complete loop in the necklace (i.e. at least $y(m)$ actions), the probabilities to leave that necklace (when we are at the exit states) are
\begin{align} 
l_i & =  \epsilon_1 k_A^a (1-k_B)^b, \\
r_i & =  \epsilon_1 (1-k_A)^a k_B^b.
\end{align}
With the odds to do at least one loop increasing as $\epsilon_1$ goes to zero or $y(m)$ goes to $\infty$.

In the two final states, and again after one loop, the probabilities to leave are $\epsilon_0 L=\epsilon_0 (1-k_B)^m$ and $\epsilon_0 R=\epsilon_0 (1-k_A)^m$.
We can now use the formula \eqref{eq:general_q_formula} (when $\epsilon_0 \ll \epsilon_1 \ll 1$), in which the product simplifies as
\begin{align} 
    \prod_{i=1}^n \frac{l_i}{r_i} 
    &= \prod_{i=1}^{n(m)-2} \frac{k_A^{a_i} (1-k_B)^{b_i}}{(1-k_A)^{a_i} k_B^{b_i}} \\
    &= \frac{\prod_{i=1}^{n(m)-2} (1/k_B-1)^{b_i}}{\prod_{i=1}^{n(m)-2} (1/k_A-1)^{a_i}} \\
    &= \frac{(1/k_B-1)^{\sum_i b_i}}{(1/k_A-1)^{\sum_i a_i}} \\
    &= \left(\frac{1/k_B-1}{1/k_A-1}\right)^{m (n(m)-2)/2} 
    && \text{since } \sum_{i=1}^{n(m)-2} (a_i + b_i) = m (n(m)-2) \label{eq:long3} 
\end{align}
where $a_i$ and $b_i$ are the occurrences of A and B in the necklace $i$. 

Inserting \eqref{eq:long3} into \eqref{eq:general_q_formula} and using the value of $k_A$ and $k_B$ given in \eqref{00} for hypothesis $H_A$ leads to
\begin{equation}
    p(n+1) = \left(1 + \alpha^{m(1-n(m))}\right)^{-1},
    \qquad\mbox{with }
    \alpha = \frac{1- \mu}{1 + \mu}. 
\end{equation}
and $p(0)$ can be obtained by changing $\mu$ into $-\mu$ or $\alpha$ into $1/\alpha$, by symmetry of the necklace policy. The probabilities under hypothesis $H_B$ are obtained by exchanging $p(0)$ and $p(n+1)$, again by symmetry. 

Under hypothesis $H_A$ (resp. $H_B$), the value $p(n+1)$ (resp. $p(0)$) corresponds to the probability $q^*$ to play the worst arm if the probabilities of the non-final necklaces are zero (which is asymptotically true if $\epsilon_0 \ll \epsilon_1$). 
To reach $q^*$, we also need $\epsilon_1$ to be asymptotically small in order to guarantee at least one loop in each necklace and $\epsilon_0$ has to be asymptotically larger than the reset $r$. In summary, we need $r\ll \epsilon_0\ll\epsilon_1\ll 1$ to reach asymptotically $q^*$, otherwise the probability $q$ will be larger than $q^*$.

\subsection{column of confidence policy with no reset} \label{app:u_shape_no_reset}

When there is no reset $r=0$ we can compute the performance of the column of confidence policy for two arms of probabilities $k_A$ and $k_B$. We obtain via \eqref{eq:general_q_formula} (assuming $k_A > k_B$)
\begin{equation}
    q^{-1} - 1 = \frac{1-k_B}{1-k_A} \left(\frac{1/k_B-1}{1/k_A-1}\right)^{M-1}
\end{equation}

\end{appendices}

\end{document}